\theoremstyle{plain}
\newtheorem{theorem}{Theorem}[section]
\newtheorem{hypothesis}{Hypothesis}[section]
\theoremstyle{definition}
\crefname{section}{\S}{\S\S}
\Crefname{section}{\S}{\S\S}
\crefname{table}{Tab.}{}
\crefname{figure}{Fig.}{}
\crefname{algorithm}{Algorithm}{}
\crefname{equation}{Eq.}{Eq.}
\crefname{appendix}{App.}{}
\crefname{theorem}{Theorem}{}
\crefname{prop}{Proposition}{}
\crefname{cor}{Corollary}{}
\crefname{observation}{Observation}{}
\crefname{assumption}{Assumption}{}
\crefname{hypothesis}{Hyp.}{Hypotheses}
\newcommand*\iftodonotes{\if@todonotes@disabled\expandafter\@secondoftwo\else\expandafter\@firstoftwo\fi}  
\newcommand{\bleu}{\textsc{bleu}\xspace}
\newcommand{\defn}[1]{\textbf{#1}}
\newcommand{\xx}{\mathbf{x}}
\newcommand{\vv}{\mathbf{v}}
\newcommand{\yy}{\mathbf{y}}
\newcommand{\calY}{\mathcal{Y}}
\newcommand{\calB}{\mathcal{B}}
\newcommand{\calR}{\mathcal{R}}
\newcommand{\nmax}{n_\mathrm{max}}
\newcommand{\defeq}{\vcentcolon=}
\newcommand{\vtheta}{{\boldsymbol \theta}}
\newcommand{\ptheta}{p_{\scaleto{\vtheta}{4pt}}}
\newcommand{\vocab}{\mathcal{V}}
\newcommand{\vocabeos}{\bar{\mathcal{V}}}
\newcommand{\eos}{\textsc{eos}\xspace}
\newcommand{\bos}{\textsc{bos}\xspace}
\DeclareMathSymbol{\shortminus}{\mathbin}{AMSa}{"39}
\DeclareMathOperator*{\argmax}{argmax}
\newcommand{\ucambridge}{\emoji[twitter]{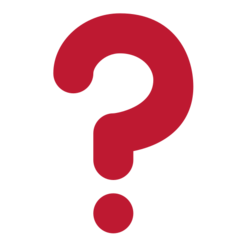}}
\newcommand{\ethz}{\emoji[twitter]{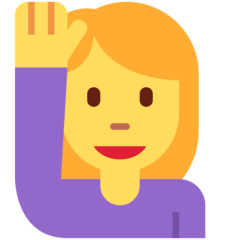}}
\newcommand{\jhu}{\emoji[twitter]{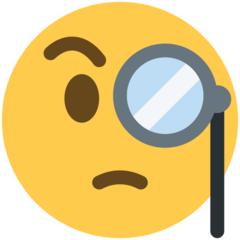}}
\title{If Beam Search is the Answer, What was the Question?}
\author{Clara Meister\raise1.0ex\hbox{\normalfont\ethz}~\;~Tim Vieira\raise1.0ex\hbox{\normalfont\jhu}~\;~Ryan Cotterell\raise1.0ex\hbox{\normalfont\ucambridge\!,\ethz} \\
  \raise1.0ex\hbox{\normalfont\ethz}ETH Z\"{u}rich~\;~\raise1.0ex\hbox{\normalfont\jhu}Johns Hopkins University~\;~\raise1.0ex\hbox{\normalfont\ucambridge}University of Cambridge \\
  \texttt{clara.meister@inf.ethz.ch}~\;~\texttt{tim.vieira@gmail.com} \\ \texttt{ryan.cotterell@inf.ethz.ch}
  }
\date{}
\begin{document}

\maketitle
\begin{abstract}
Quite surprisingly, exact maximum a posteriori (MAP) decoding of neural language generators frequently leads to low-quality results \cite{stahlberg_nmt_2019}. Rather, most state-of-the-art results on language generation tasks are attained using beam search despite its overwhelmingly high search error rate. This implies that the MAP objective alone does not express the properties we desire in text, which merits the question: if beam search is the answer, what was the question? We frame beam search as the exact solution to a different decoding objective in order to gain insights into \emph{why} high probability under a model alone may not indicate adequacy. We find that beam search enforces \emph{uniform information density} in text, a property motivated by cognitive science.  We suggest a set of decoding objectives that explicitly enforce this property and find that exact decoding with these objectives alleviates the problems encountered when decoding poorly calibrated language generation models. Additionally, we analyze the text produced using various decoding strategies and see that, in our neural machine translation experiments, the extent to which this property is adhered to strongly correlates with \bleu.
Our code is publicly available at \url{https://github.com/rycolab/uid-decoding}.
\end{abstract}

\section{Introduction}
As a simple search heuristic, beam search has been used to decode models developed by the NLP community for decades.  Indeed, it is noteworthy that beam search is one of the few NLP
algorithms that has stood the test of time: It has remained a cornerstone of NLP systems since the 1970s \cite{reddy-1977}. As such, it became the natural choice for decoding neural probabilistic text generators---whose design makes evaluating the full search space impossible \cite{kalchbrenner-blunsom-2013-recurrent, sutskever_seq2seq, neural_conv, yin-etal-2016-neural-generative}. While there is no formal guarantee that beam search will return---or even approximate---the highest-scoring candidate under a model, it has repeatedly proven its merit in practice \cite{AAAI1714571, edunov-etal-2018-understanding, XLNET} and, thus, has largely been tolerated---even embraced---as NLP's go-to search heuristic. 
\begin{figure}
\centering
    \includegraphics[width=\linewidth]{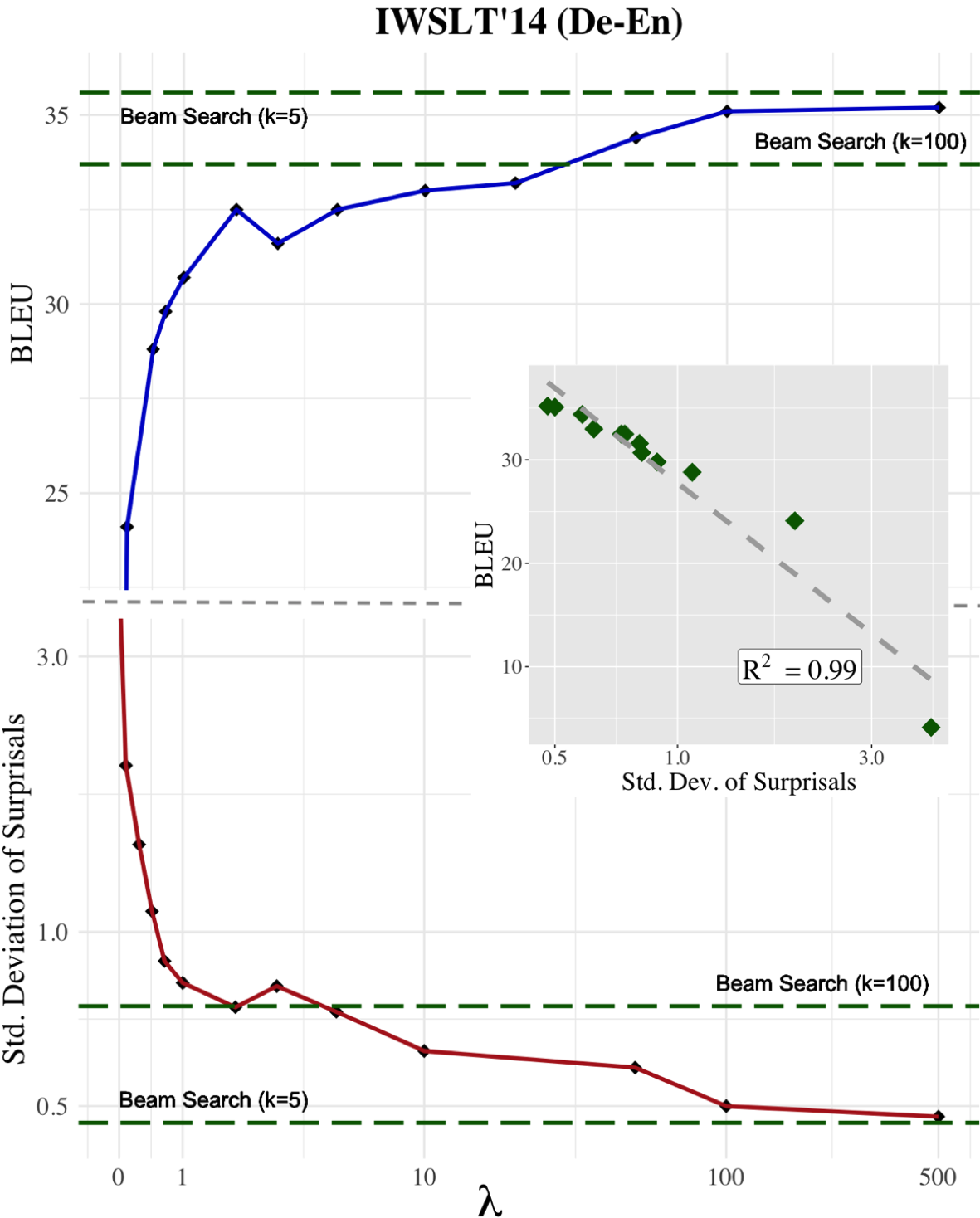}
  \caption{Average std.\@ deviation $\sigma$ of surprisals (per sentence) and corpus \bleu for translations generated using exact search over the MAP objective with a greedy regularizer (\cref{eq:greedy}) with varying degrees of $\lambda$. References for beam search ($k=5$ and $k=100$) are included. Sub-graph shows the explicit relationship between \bleu and $\sigma$. $\lambda$ and $\sigma$ axes are log-scaled.}
  \setlength{\belowcaptionskip}{-30pt}
    \label{fig:bleu_surprisal}
\end{figure}
\indent However, in the context of neural machine translation (NMT), a shocking empirical finding has emerged:  Using beam search to decode sentences from neural text generators almost invariably leads to better text than using exact search (or beam search with a very large beam size).  In fact, \newcite{stahlberg_nmt_2019} report that exact search returns the empty string in $>50\%$ of cases,\footnote{This rate tends to decrease for larger models, although it is often still a considerable percentage.} showing that the success of beam search does not stem from its ability to approximate exact decoding in practice, but rather due to a hidden inductive bias embedded in the algorithm. This inductive bias appears to be \emph{paramount} for generating desirable text from neural probabilistic text generators.
While several works explore this phenomenon \cite{murray-chiang-2018-correcting, yang-etal-2018-breaking, stahlberg_nmt_2019,pmlr-v97-cohen19a}, no one has yet hypothesized what beam search's hidden inductive bias may be. Our work fills this gap.

We analyze the \defn{beam search blessing} by reverse engineering an objective that beam search returns the exact solution for. Specifically, we introduce a regularizer for the the standard (MAP) decoding objective for text generation models such that the exact solution to this regularized objective is equivalent to the solution found by beam search under the unmodified objective. Qualitative inspection reveals that our ``beam search regularizer'' has a clear connection to a theory in cognitive science---the \textbf{uniform information density} hypothesis \cite[UID;][]{NIPS2006_3129}. The UID hypothesis states that---subject to the constraints of the grammar---humans prefer sentences that distribute information (in the sense of information theory) equally across the linguistic signal, e.g., a sentence. In other words, human-produced text, regardless of language, tends to have evenly distributed surprisal, formally defined in information theory as negative log-probability. This connection suggests beam search has an interpretation as exact decoding, but with a UID-promoting regularizer that encourages evenly distributed surprisal in generated text. This insight naturally leads to the development of several new regularizers that likewise enforce the UID property.

Empirically, we experiment with our novel regularizers in the decoding of NMT models. We first observe a close relationship between the standard deviation of surprisals---an operationalization of UID---and \bleu, which suggests that high-quality text does indeed exhibit the UID property. Additionally, we find that even with exact search, our regularized objective leads to performance similar to beam search on standard NMT benchmarks. Both of these observations are reflected in \cref{fig:bleu_surprisal}.
Lastly, we see that our regularizers alleviate the text-quality degradation typically seen when decoding with larger beam sizes. We take all the above as evidence that our proposed explanation of beam search's inductive bias indeed elucidates \emph{why} the algorithm performs so well as a search heuristic for language generation tasks.

\section{Neural Probabilistic Text Generation}
Probabilistic text generators
define a probability distribution $\ptheta(\yy \mid \xx)$
over an output space of hypotheses $\calY$ (to be defined in \cref{eq:candidates}) conditioned on an input $\xx$.\footnote{The input could be another sentence, a semantic structure or an image, to name a few examples.} Modern generators are typically parameterized by a deep neural network---possibly recurrent---with a set of learned weights $\vtheta$. 
In the case of text generation, the full set of possible hypotheses grows exponentially with the vocabulary size $|\vocab|$. We consider the set of complete hypotheses, i.e., valid outputs, as
\begin{equation}\label{eq:candidates}
    \calY \defeq \{ \bos \circ \vv \circ \eos \mid \vv \in \vocab^* \}
\end{equation} 
\noindent where $\circ$ is string concatenation and $\vocab^*$ is the Kleene closure of $\vocab$. In words, valid hypotheses are text, e.g., sentences or phrases, padded with distinguished tokens, \bos and \eos.
In this work, we consider models that are locally normalized, i.e., the model $\ptheta$ is defined as the product of probability distributions:
\begin{equation}
    p_\vtheta(\yy \mid \xx) = \prod_{t=1}^{|\yy|}p_\vtheta(y_t \mid \xx, \yy_{<t})
\end{equation}
\noindent where each $\ptheta(\cdot \mid \xx, \yy_{<t})$ is a distribution with support
over $\vocabeos \defeq \vocab \cup \{\eos\}$ and $\yy_{<1} = y_0  \defeq \bos$.\looseness=-1

The decoding objective for text generation aims to find the most-probable hypothesis among all candidate hypotheses, i.e.
we aim to solve the following optimization problem:
\begin{equation}
    \yy^\star = \argmax_{\yy \in \calY} \log \ptheta(\yy \mid \xx)
    \label{eq:MAP}
\end{equation}
This is commonly known as maximum a posteriori (MAP)
decoding since $\ptheta$ is a probability model. While there exists a wealth of literature on decoding algorithms for statistical text generation models, e.g., phrase-based machine translation models, many of these methods cannot reasonably be used with neural models. Specifically, due to the non-Markovian structure of most neural text generators, dynamic-programming algorithms for searching over the exponentially large space are not efficient in this setting. Indeed, there are formal results that solving \cref{eq:MAP} with a recurrent neural network is NP-hard \cite{chen-etal-2018-recurrent}. Therefore decoding is performed almost exclusively with heuristic methods, such as beam search.

\subsection{Beam Search}
Beam search is a form of pruned breadth-first search where the breadth is limited to $k\in \mathbb{Z}_+$ (i.e., a maximum of $k$ hypotheses) are expanded at each time step. We express beam search as the following recursion:
\begin{align}
    Y_0 &= \{\bos\} \\
    Y_t &= \argmax_{\substack{Y' \subseteq \calB_t, \\ |Y'| = k}}\ \log \ptheta(Y' \mid \xx) 
    \label{eq:beam}
\end{align}
where we define the candidate set at $t>0$
\begin{equation}
    \calB_t = \Big\{\yy_{t\shortminus 1} \circ y \mid y \in \vocabeos \textbf{ and } \yy_{t \shortminus 1} \in Y_{t\shortminus 1} \Big\}
\end{equation} 
\noindent For notational convenience, we define $\eos \circ \eos = \eos$. The above algorithm terminates after a fixed number of iterations\footnote{If all hypotheses in $Y_{t}$ end in \eos for some $t < \nmax$, then we may terminate beam search early as it is then gauranteed that $Y_{t} = Y_{\nmax}$. We do not consider further early-stopping methods for beam search \cite{huang-etal-2017-finish,yang-etal-2018-breaking, meister+al.tacl20} as they generally should not affect the \emph{quality} of the decoded set.} $\nmax$ and the set $Y_{\nmax}$ is returned. 
We overload $\ptheta(\cdot 
\mid \xx)$ to take a set of hypotheses as an argument instead of just a single hypothesis. In this case, $\ptheta(Y 
\mid \xx) \coloneqq \prod_{\yy \in Y} \ptheta(\yy \mid \xx)$.\footnote{There do exist objectives that take into account interactions between hypotheses in a set, e.g., diverse beam search \cite{diverse-beam-search}, but we do not consider those here.}  Using a similar schema, the $\argmax$ may also operate over a different objective, e.g., log-probabilities combined with various rewards or penaties, such as those discussed in \cref{sec:alt-decode}.

Beam search has a long history in sequence transduction. For example, many of the decoding strategies used in statistical machine translation (SMT) systems were variants of beam search \cite{och-etal-1999-improved, koehn-etal-2003-statistical, pharoah}. As language generation systems moved away from phrase-based statistical approaches and towards neural models, beam search remained the de-facto decoding algorithm \cite{sutskever_seq2seq, neural_conv}. However, it has been observed that when used as a decoding algorithm for neural text generation, beam search (for small beams) typically has a large percentage of search errors \cite{stahlberg_nmt_2019}. Counterintuitively, it is widely known that increasing the beam size beyond $5$ can hurt model performance in
terms of downstream evaluation metrics (e.g., \textsc{bleu}, \textsc{rouge}); while a number of prior works have referred to this phenomenon as a curse \cite{koehn-knowles-2017-six, yang-etal-2018-breaking, pmlr-v97-cohen19a}, it should perhaps be seen as a \emph{blessing}. Beam search typically generates well-formed and coherent text from probabilistic models, whose global optimum in many cases is the empty string, when they otherwise might fail to produce text at all. As we demonstrate in \cref{sec:uid}, this text also tends to be \emph{human-like}. We will subsequently explore possible reasons as to why beam search leads to desirable text from models that are otherwise poorly calibrated, i.e., poor representations of the true distribution $p(\yy\mid\xx)$ \cite{guo_calibration}.\looseness=-1

\subsection{Alternative Decoding Objectives}\label{sec:alt-decode}
When the MAP objective (\cref{eq:MAP}) is used for decoding neural text generators, the results are generally not satisfactory.
Among other problems, the generated texts are often short and defaults to high-frequency words \cite{cho-etal-2014-properties, neural_conv, shen-etal-2016-minimum}.
Methods such as length and coverage normalization \cite{jean-etal-2015-montreal, tu-etal-2016-modeling, murray-chiang-2018-correcting}, which augment the MAP objective with an additive term or multiplicative factor, have been adopted to alleviate these issues. 
For example, two such forms of length\footnote{The predominant form of length normalization divides (log) sequence probability by the length of the hypothesis rather than using an additive reward as in \cite{he-2016-length}. We present results from the former in our experiments as we find it empirically leads to better performance.} and coverage normalization use the following modified MAP objective respectively during decoding to produce higher-quality output:
\begin{align}
    \log\, & \ptheta(\yy\! \mid\! \xx) + \lambda |\yy| \label{eq:length-reward}\\
   \log \ptheta(\yy\! \mid\! \xx) + &\lambda \sum_{i=1}^{|\xx|}\log \min \left(1, \sum_{j=1}^{|\yy|}\alpha_{ij}\right) \label{eq:attention-coverage}
\end{align}
\noindent where $\lambda > 0$ is the (tunable) strength of the reward and $\alpha_{ij}$ is the attention weight \cite{bahdanau2014neural} from the $j^\text{th}$ decoding step over the $i^\text{th}$ input.
\Cref{eq:length-reward} directly rewards longer outputs \cite{he-2016-length} while \cref{eq:attention-coverage} aims to reward coverage of input words in a prediction using the attention mechanism of an encoder--decoder model as an oracle \cite{tu-etal-2016-modeling}.
While such methods help obtain state-of-the-art results in neural MT \cite{Wu2016GooglesNM, conv_nmt, ng-etal-2019-facebook}, 
we view them as a patch to the observed problems. The fact that text quality still degrades with increased beam sizes when these rewards are used \cite{koehn-knowles-2017-six, ott2018analyzing} suggests that they do not address the inherent issues with text generation systems. We subsequently hypothesize about the nature of these issues and provide a set of linguistically motivated regularizers---inspired by beam search---that appear to alleviate them.

\section{Deriving Beam Search}\label{sec:derive}
We introduce
a \defn{regularized decoding} framework. The idea
is simple; we seek to solve the \emph{regularized} optimization problem to decode
\begin{equation}
    \yy^\star = \argmax_{\yy \in \calY} \Big(\log \ptheta(\yy \mid \xx) - \lambda \cdot \calR(\yy)\Big)
    \label{eq:regularized-obj}
\end{equation}
for a strategically chosen $\calR(\cdot)$. Clearly,
for certain $\calR(\cdot)$, we recover the decoding
objectives discussed in \cref{sec:alt-decode}. 
The question we ask in this work is the following:
If we want to view beam search as an exact-decoding algorithm,
which $\calR(\cdot)$ should we choose to recover beam search?

We discovered an elegant answer rooted in information theory and cognitive science (the connections are discussed in-depth in \cref{sec:uid}).  We first define the model's time-dependent surprisals, which are an information-theoretic concept that characterizes the amount of new information expressed at time $t$:
\begin{align}
    u_0(\bos) &= 0 \nonumber \\
    u_t(y) &= -\log \ptheta(y \mid  \xx, \yy_{<t}), \,\,\textbf{for }t \ge 1
\end{align}
Note that minimally surprising means maximally probable.
For the special case of greedy decoding, where $k=1$, the following choice of regularizer recovers beam search for sufficiently large $\lambda$:
\begin{align}
    \!\!\!\calR_{\mathrm{greedy}}(\yy)
     = \sum_{t=1}^{|\yy|} \left(u_t(y_t) - \min_{y' \in \vocab} u_t(y')\right)^2
     \label{eq:greedy}
\end{align}
The intuition behind \cref{eq:greedy} is to encourage locally optimal decisions: Every local surprise $u_t$ should be close
to the minimally surprising choice. In the limiting case where locally optimal decisions are not just encouraged, but rather enforced,
we recover greedy search.

Formally, we have the following theorem:
\begin{theorem}\label{sec:greedy-thm}
The argmax of $\log \ptheta(\yy \mid \xx) - \lambda \cdot \calR_{\mathrm{greedy}}(\yy)$ is exactly computed by greedy search in the limiting case as $\lambda \rightarrow \infty$.
\end{theorem}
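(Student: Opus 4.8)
The plan is to read the limit $\lambda \to \infty$ as a lexicographic optimization: since the term $-\lambda \cdot \calR_{\mathrm{greedy}}(\yy)$ dominates $\log \ptheta(\yy \mid \xx)$ once $\lambda$ is large, the argmax must \emph{first} minimize the penalty $\calR_{\mathrm{greedy}}$ and may use the log-probability only to break ties among penalty-minimizers. Accordingly, I would prove the statement in two movements: (i) show that the greedy output $\yy^g$ (the $k=1$ case of the recursion in \cref{eq:beam}) is the essentially unique global minimizer of $\calR_{\mathrm{greedy}}$, and (ii) show that this forces $\yy^g$ to be the argmax of the full regularized objective for all sufficiently large $\lambda$.

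For (i), the starting point is that $\calR_{\mathrm{greedy}}$ is a sum of squares, so $\calR_{\mathrm{greedy}}(\yy) \ge 0$, and the $t$-th summand vanishes exactly when the emitted token $y_t$ attains the minimal surprisal at step $t$, i.e. when $y_t$ is the locally most-probable (greedy) continuation of $\yy_{<t}$. I would then argue that $\yy^g$ makes every summand vanish, so that $\calR_{\mathrm{greedy}}(\yy^g)$ attains the global minimum (equal to $0$ once the terminal step is handled), whereas any $\yy \neq \yy^g$ first departs from the greedy path at some step $\tau$ and hence contributes a strictly positive term $\big(u_\tau(y_\tau) - \min_{y' \in \vocab} u_\tau(y')\big)^2 > 0$, giving a strictly larger penalty. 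Under the standing assumption that each per-step argmin is unique (or that ties are broken consistently with greedy), this makes $\yy^g$ the strict, unique minimizer of $\calR_{\mathrm{greedy}}$.

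For (ii), I would use the fact that greedy search — like beam search — explores only hypotheses of bounded length $\nmax$, so the relevant subset of $\calY$ is finite, sidestepping any difficulty from the infinite Kleene closure. Writing the objective gap between $\yy^g$ and a competitor $\yy$ as
\[
    \big(\log \ptheta(\yy^g \mid \xx) - \log \ptheta(\yy \mid \xx)\big) + \lambda\big(\calR_{\mathrm{greedy}}(\yy) - \calR_{\mathrm{greedy}}(\yy^g)\big),
\]
the second term is $\lambda$ times a strictly positive constant for each $\yy \neq \yy^g$, while the first is a fixed finite number. Choosing $\lambda$ larger than the (finite) maximum over the finitely many competitors of the ratio $\big(\log \ptheta(\yy \mid \xx) - \log \ptheta(\yy^g \mid \xx)\big) / \big(\calR_{\mathrm{greedy}}(\yy) - \calR_{\mathrm{greedy}}(\yy^g)\big)$ then makes this gap positive for every competitor simultaneously, so $\yy^g$ is the unique argmax.

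The step I expect to be the main obstacle is the terminal \eos step in part (i): because the inner minimization in $\calR_{\mathrm{greedy}}$ ranges over $\vocab$ rather than $\vocabeos$, I must verify that emitting \eos exactly where greedy does still attains the global minimum of the penalty, and that no alternative hypothesis (for instance, one that defers \eos in favor of another low-surprisal token) can achieve a smaller penalty. Pinning down this boundary term, together with the tie-breaking convention needed to guarantee strict uniqueness, is where the argument demands the most care; the dominance argument in (ii) is then routine.
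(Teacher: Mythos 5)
Your proposal is correct and rests on the same key observation as the paper's proof --- the greedy hypothesis makes every summand of $\calR_{\mathrm{greedy}}$ vanish, while any competitor's first deviation from the greedy path contributes a strictly positive term --- but you package it differently. The paper argues by induction over time steps, pushing the limit $\lambda \to \infty$ inside the objective so that each per-step penalty becomes a $0$/$\infty$ indicator of whether the step is greedy; any non-greedy choice then sends the hypothesis score to $-\infty$, forcing $y^\calR_t = y^{\mathrm{greedy}}_t$ at every $t$. You instead prove a global statement (the greedy output is the unique minimizer of $\calR_{\mathrm{greedy}}$) and then run a finite-competitor dominance argument with an explicit threshold on $\lambda$. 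Your route buys some additional rigor: it cleanly separates the combinatorial claim from the limit, and it makes explicit two assumptions the paper's proof leaves implicit --- that the competitor set must be finite (length bounded by $\nmax$) for a single finite $\lambda$ to dominate all competitors, since over an unbounded hypothesis space the best-versus-second-best surprisal gaps could shrink to zero and no finite threshold would suffice, and that per-step argmins must be unique (or ties broken consistently) for the argmax to be well-defined. You are also right that the terminal step is the genuine soft spot: as written, the inner minimization in \cref{eq:greedy} ranges over $\vocab$ rather than $\vocabeos$, so if greedy search emits \eos because it is the least surprising token in $\vocabeos$, the greedy path itself incurs a positive penalty while a competitor that defers \eos could incur none; the paper's proof silently treats the minimization as being over the same set that greedy search selects from, and your proof needs the same (reasonable) reading of the regularizer to go through.
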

\begin{proof}
By induction. In \cref{sec:theory}.
\end{proof}

\cref{sec:greedy-thm} establishes that greedy search is the
limiting case of a regularizer that seeks to encourage decisions to have high-probability \emph{locally}.  In contrast, the optimal MAP solution will generally not have this property.  This is because a globally optimal MAP decoder may require a locally suboptimal decision for the sake of being able to make a \defn{compensatory decision} later that leads to global optimality.\footnote{Indeed, we only
have formal guarantees for greedy algorithms when local optimality translates into global optimality \cite[Chapter~4]{algorithm_design}.} 

We now consider the generalization of greedy search ($k=1$) to full beam search ($k \geq 1$). Recall that beam search returns not just a single output, but rather a \emph{set} of outputs. Thus, we must consider
the set-decoding objective
\begin{equation}\label{eq:set-objective}
    Y^\star = \argmax_{\substack{Y \subseteq \calY, \\ |Y| = k}} \Big(\log \ptheta(Y \mid \xx) - \lambda \cdot \calR(Y)\Big)
\end{equation}
where, as before, we have used our overloaded notation $\ptheta(\cdot \mid \xx)$ to score sets of hypotheses.
Similarly to $\calR_{\mathrm{greedy}}$, we formulate a greedy set-regularizer to recover beam search:
\begin{align}
    \calR_{\mathrm{beam}}(Y) &= \\
    \sum_{t=1}^{\nmax} &\left(u_t(Y_t) - \min_{\substack{Y' \subseteq \calB_t, \\ |Y'| = k}} u_t(Y')\! \right)^2 \nonumber
\end{align}
where $Y_t = \{\yy_{1:t} \mid \yy \in Y \}$ corresponds to the set of hypotheses expanded by $t$ steps.\footnote{This includes both incomplete hypotheses of length $t$ and complete hypotheses that have reached \eos at step $\leq t$.}  Note that we additionally overload surprisal to operate on sets, $u_t(Y) = \sum_{y \in Y} u_t(y)$.
We prove an analogous theorem to \cref{sec:greedy-thm} for this regularizer.
\begin{theorem}\label{sec:beam-limit}
The argmax of $\log \ptheta(Y\mid \xx)  - \lambda \cdot \calR(Y)$ is computed by beam search with beam size of $k=|Y|$ as $\lambda \rightarrow \infty$.
\end{theorem}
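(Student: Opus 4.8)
The plan is to mirror the inductive argument used for \cref{sec:greedy-thm}, lifting it from single hypotheses to size-$k$ sets. The first observation is that $\calR_{\mathrm{beam}}$ is a sum of squared terms, each of which is non-negative: for every $t$ the prefix set $Y_t$ is itself a feasible competitor in the inner minimization (it is a size-$k$ subset of $\calB_t$, since each length-$t$ prefix of $Y$ extends a length-$(t\shortminus 1)$ prefix in $Y_{t-1}$), so $u_t(Y_t) \geq \min_{Y' \subseteq \calB_t,\, |Y'| = k} u_t(Y')$ and the $t$-th term is $\geq 0$. Hence $\calR_{\mathrm{beam}}(Y) \geq 0$, with equality iff every term vanishes. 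Running beam search itself produces a set $Y^{\mathrm{bs}} = Y_{\nmax}$ whose prefix sets achieve each inner minimum, so the feasible region $\{Y : \calR_{\mathrm{beam}}(Y) = 0\}$ is non-empty. As $\lambda \to \infty$, the penalty $-\lambda \cdot \calR_{\mathrm{beam}}(Y)$ drives the objective to $-\infty$ for any $Y$ with $\calR_{\mathrm{beam}}(Y) > 0$, while leaving $\log \ptheta(Y \mid \xx)$ untouched on the zero-regularizer set; so for all sufficiently large $\lambda$ the argmax must lie in $\{Y : \calR_{\mathrm{beam}}(Y) = 0\}$.

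Next I would characterize this zero-regularizer set. Since surprisal is negative log-probability and $u_t(Y') = \sum_{y \in Y'} u_t(y)$, minimizing $u_t(Y')$ over size-$k$ subsets $Y' \subseteq \calB_t$ is identical to the beam-search selection $\argmax_{Y' \subseteq \calB_t,\, |Y'| = k} \log \ptheta(Y' \mid \xx)$ of \cref{eq:beam}. Thus $\calR_{\mathrm{beam}}(Y) = 0$ forces, for each $t$, the prefix set $Y_t$ to coincide with the beam-search frontier selected from $\calB_t$.

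The core of the argument is then an induction on $t$ that identifies the zero-regularizer set with the beam-search trajectory. The base case is immediate: every hypothesis begins with \bos, so $Y_0 = \{\bos\}$ is forced, matching the recursion's initialization. For the inductive step, assume $Y_{t-1}$ equals the beam frontier $Y_{t-1}^{\mathrm{bs}}$. Because $\calB_t$ is built entirely from $Y_{t-1}$, the candidate set appearing inside the $t$-th term of $\calR_{\mathrm{beam}}$ is exactly the beam-search candidate set; the vanishing of that term then forces $Y_t$ to be the beam-search selection from it, i.e.\ $Y_t = Y_t^{\mathrm{bs}}$. Carrying the induction to $t = \nmax$ shows $Y = Y_{\nmax}^{\mathrm{bs}}$, so the argmax equals the beam-search output, as claimed.

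I expect the main obstacle to be the coupling between $\calB_t$ and the previous prefix set: the inner minimization coincides with a genuine beam-search step only once we know $Y_{t-1}$ is itself the correct frontier, which is precisely why a one-shot argument fails and the induction is essential --- each level's correctness must be established before the next level's candidate set is even well-defined. A secondary technical point I would need to dispatch is the size bookkeeping: I must verify that taking length-$t$ prefixes of a size-$k$ set of complete hypotheses recovers a size-$k$ frontier, so that $|Y_t| = k$ and the inner feasibility constraint is met. Finally, as in the greedy case, I would restrict to the generic setting of distinct probabilities so that each $\argmax$/$\argmin$ is unique and the zero-regularizer set is the single beam-search output rather than a tie-induced collection.
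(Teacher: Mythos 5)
Your proof takes essentially the same route as the paper's: the paper proves the greedy ($k=1$) case by induction on $t$, showing that as $\lambda \to \infty$ the $t$-th penalty term evaluates to $0$ for the locally optimal choice and $\infty$ otherwise, and then states that the beam case ``follows from the same argument, albeit with sets instead of an individual hypothesis''---which is precisely the lifting you carry out. Your write-up is in fact more explicit than the paper's about the points that make the set version nontrivial (non-emptiness of the zero-regularizer set, the dependence of $\calB_t$ on $Y_{t-1}$ being exactly what forces the induction, tie-breaking, and whether the prefix sets $Y_t$ really have size $k$), the last of which the paper glosses over entirely.
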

\begin{proof}
The proof follows from the same argument as \cref{sec:greedy-thm}, albeit with sets instead of an individual hypothesis.
\end{proof}
\noindent Note that in the (predominant) case where we want to return a single candidate sentence as the output rather than an entire set---as would be generated by \cref{eq:set-objective}---we can take the highest-probability sequence in the chosen set $Y^\star$ as our decoded output.
The objective in \cref{eq:set-objective} boils down to a subset selection problem which, given the size of $\calY$, is a computationally prohibitive optimization problem. Nonetheless, we can use it to analyze the properties enforced on generated text by beam search.\looseness=-1
\begin{figure*}
\centering
    \includegraphics[width=\textwidth]{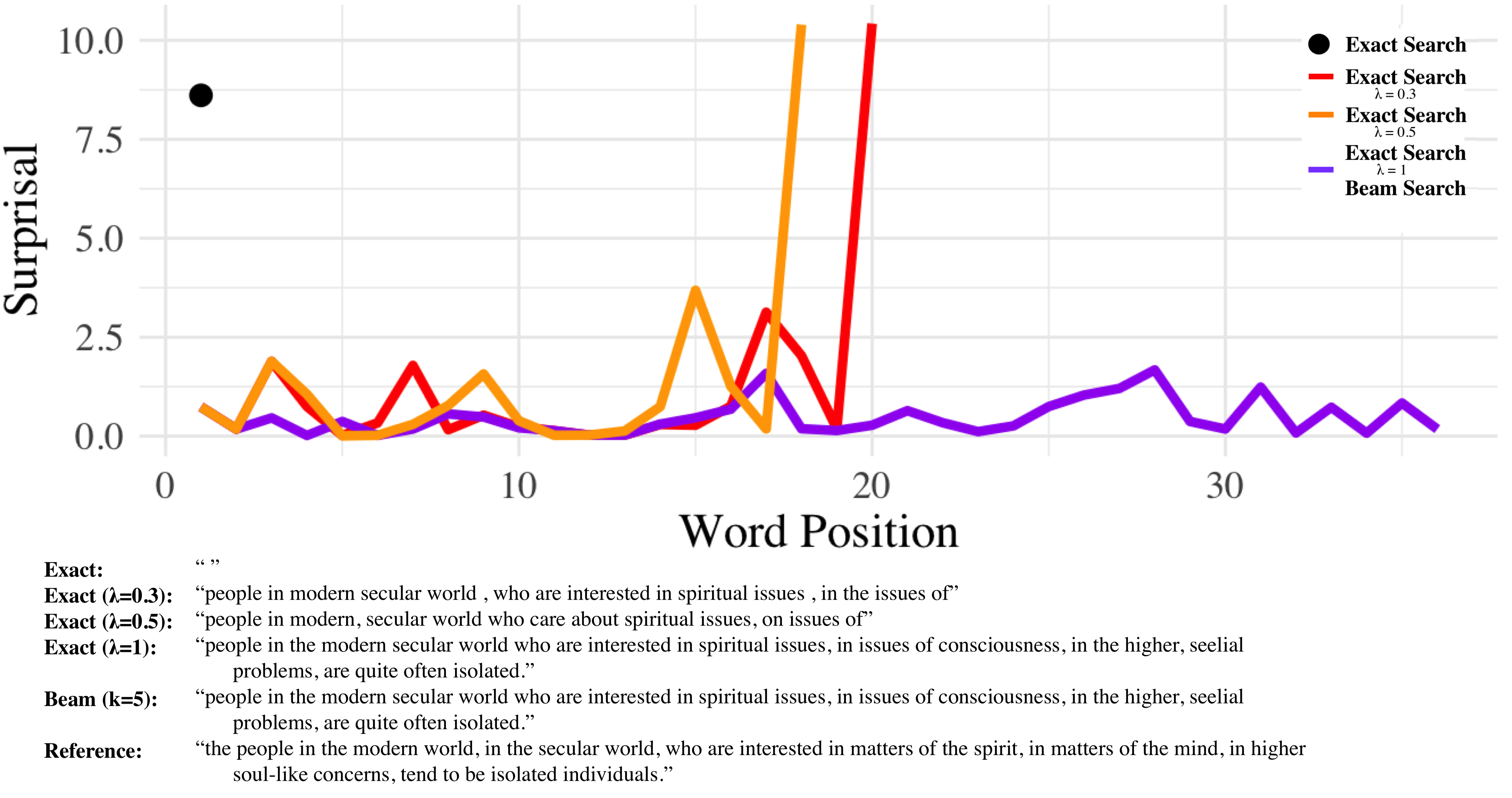}
  \caption{Surprisals (according to $\ptheta$) by time step of sequences generated with various decoding strategies. Values of $\lambda$ indicate the greedy regularizer was used with the corresponding $\lambda$ value. Note that beam search (k=5) and exact search ($\lambda=1.0$) return the same prediction in this example, and thus, are represented by the same line.}
    \label{fig:surprisal}
\end{figure*}

\section{From Beam Search to UID}\label{sec:uid}
The theoretical crux of this paper hinges on a proposed relationship between beam search and the \defn{uniform information density} hypothesis \cite{levy_thesis,NIPS2006_3129}, a concept from cognitive science:\looseness=-1
\begin{hypothesis}\label{hyp:uid}
``Within the bounds defined by grammar, speakers prefer utterances that distribute information uniformly across the signal (information density). Where speakers have a choice between several variants to encode their message, they prefer the variant with more uniform information density (ceteris paribus)'' \cite{jaeger2010redundancy}. 
\end{hypothesis}

At its core, the theory seeks to explain various aspects of human language processing in terms of information theory;
it is often applied to an area of psycholinguistics known as sentence processing where
the UID hypothesis is used to explain experimental data \cite{hale-2001-probabilistic2}. As the UID hypothesis concerns a cognitive process (virtually) independent of the language in use, the theory should hold across languages \cite{jaeger_utility}. 

To see the hypothesis in action, 
consider the classic case of syntactic reduction from \citet{NIPS2006_3129}:

\begin{enumerate}[(1)]
    \item How big is [$_\mathrm{NP}$ the family$_\mathrm{i}$ [$_\mathrm{RC}$ (that) you cook for $_\mathrm{-i}$]]?
\end{enumerate}
In the above example, the sentence does not require the relativizer \textit{that} at the start of the relative clause (denoted by RC); it would also be syntactically correct without it. However, many would agree that the relativizer makes the text qualitatively better. The information-theoretic explanation of this perception is that without the relativizer, the first word of a relative clause conveys two pieces of information simultaneously: the onset of a relative clause and part of its internal contents. Including the relativizer spreads this information across two words, thereby distributing information across the sentence more uniformly and avoiding instances of high surprisal---which, from a psycholinguistic perspective, are displeasing. In short, the relativizer helps to ensure the UID property of the sentence. 

Importantly, the preference suggested by the UID hypothesis is between possible utterances (i.e., outputs) where grammaticality and information content are held constant. Any violation of these assumptions presents confounding factors when measuring, or optimizing, the information density of the generated text.
In our setting, there is reason to believe that grammaticallity and information content are approximately held constant while selecting between hypothesis.
First, the high-probability outputs of neural generation models tend to be grammatical \cite{holtzman2019curious}.
Second, because decoding is conditioned on a specific input $\xx$,
the conditional probability model $p_{\vtheta}(\yy \mid \xx)$ is able to assign high-probability to outputs $\yy$ that are plausible outputs (e.g., translations) of the given $\xx$.  Thus, even though the various $\yy$ are not constrained to be sematically equivalent to one another, they tend to express similar information because they are at least relevant to the same $\xx$.
This is why our regularized optimization problem \cref{eq:regularized-obj} combines
an information-density regularizer with $\log p_{\vtheta}(\yy \mid \xx)$: 
the term $\log p_{\vtheta}(\yy \mid \xx)$ rewards grammaticallity and content relevance, whereas the information-density regularizer encourages the human preferences posited by the UID hypothesis.  The parameter $\lambda$ allows the preferences to be calibrated to perform well on downstream evaluation metrics, such as \textsc{bleu} and \textsc{rouge}.

\subsection{The UID Bias in Beam Search}

It may not be immediately obvious how the UID hypothesis relates to beam search. After all, beam search narrows the scope of the search to only the lowest surprisal candidates at each time step, which does not clearly lead to a uniform distribution of surprisals in the final decoded sequences. The connection is best seen visually. 

\cref{fig:surprisal} shows the time-dependent surprisals $u_t$ under the model of several candidate translations (German to English). Recall that we have $u_t(y) \in [0, \infty)$ and that the standard decoding objective explicitly minimizes the sum of surprisals, i.e., maximizes log-probability. Therefore, the only way the distribution of a solution can become distinctly non-uniform is when there are several high-surprisal decisions in the mix; we observe this in the orange and red curves. 
Intuitively, this corresponds to the notion of compensation discussed earlier: a globally optimal decoding scheme may select a high-surprisal step at some point in order to shorten the length of the path or to take a low-surprisal step later on. 
We observe an extreme example of this behavior above: Selecting the \eos character at the first step leads to a very non-uniform distribution, i.e., the degenerate distribution, which, violates our operationalization of UID described subsequently.
In summary, we see that as $\lambda$ is decreased, the decoded sentences obey the UID property less strictly. 
Indeed, setting $\lambda = 0$, i.e., exact inference of the MAP objective, results in the empty string. 

A number of successful sampling methods ($p$-nucleus sampling \cite{holtzman2019curious} and top-$k$ sampling \cite{fan_hierarchical_2018}) enforce the UID property in generated text by the same logic as above. Both methods eliminate many of the high-surprisal choices at any given decoding step by narrowing the set of tokens that may be chosen. 

\subsection{Cognitive Motivation for Beam Search}
The goal of this work is to expose a possible inductive bias of beam search.
We now exhibit our primary hypothesis
\begin{hypothesis}\label{hyp:beam}
Beam search is a cognitively motivated search heuristic for decoding language generation models. 
The success of beam search on such tasks is, in part, due to the fact that it inherently biases the search procedure towards text that humans prefer.
\end{hypothesis}
The foundation of the argument for this hypothesis follows naturally from the previous sections:
First, we demonstrated in \cref{sec:derive} that
beam search is an exact decoding algorithm for a certain regularized objective---to wit, the one in \cref{eq:regularized-obj}. Qualitatively, we related the behavior of the regularizer to the UID hypothesis from cognitive science. As a final step, we next provide operationalizations of UID---in the form of regularizers within our regularized decoding framework---through which we can empirically test the validity of this hypothesis.

\section{Generalized UID Decoding}

If beam search is trying to optimize for UID, can we beat it at its own game?
This section develops a battery of possible sentence-level UID measures, which can be used as regularizers in our regularized decoding framework and compared experimentally on downstream evaluation metrics.

\paragraph{Variance Regularizer.}
We first consider the variance regularizer from \newcite{jain-etal-2018-uniform}. In essence, UID concerns the distribution of information over the course (i.e., time steps) of a sentence. A natural measure for this is variance of the surprisals. 
\begin{equation}
\calR_{\mathrm{var}}(\yy) = \frac{1}{|\yy|}\sum_{t=1}^{|\yy|} \Big(u_t(y_t) - \mu \Big)^2
\label{eq:var}
\end{equation}
where $\mu = \sfrac{1}{|\yy|}\sum_{t=1}^{|\yy|} u_t(y_t)$. This regularizer, in contrast to \cref{eq:greedy}, is a much more straight-forward encoding of the UID: it directly operationalizes
UID through variance.

\paragraph{Local Consistency.}
Next we consider a local consistency regularizer, also taken from  \newcite{jain-etal-2018-uniform}, that encourages adjacent surprisals to have similar magnitude:
\begin{align}
    \calR_\mathrm{local}(\yy) &= \frac{1}{|\yy|}\sum_{t=1}^{|\yy|} \Big( u_t(y_t) -  u_{t-1}(y_{t-1}) \Big)^2 
    \label{eq:lv}
\end{align}
Again, this is a straightforward encoding of the UID:
if every surprisal is similar to its neighbor, it will be close
to uniform.
Note that both of the above regularizers are defined for all decoding steps $t>0$ since we define $u_0(y_0) = 0$, $y_0 =$ \bos for all valid hypotheses. 

\paragraph{Max Regularizer.}
We propose a UID-inspired regularizer of our own design that exploits
the nature of MAP decoding, for which the overarching goal is to find a solution with low surprisal. In this setting, one strategy is to penalize decisions that move the distribution away from 0, the lowest possible surprisal. This suggests
\begin{equation}
\calR_{\mathrm{max}}(\yy) = \max_{t=1}^{|\yy|} u_t(y_t) 
\label{eq:max}
\end{equation}
would regularize for UID. Such a regularizer would also directly penalize extreme compensation during decoding (discussed in \cref{sec:derive}). It is worth noting that this regularizer has a connection to entropy regularization, which can be seen by looking at the formula for R{\'e}nyi entropy.

\paragraph{Squared Regularizer.}
Finally, we consider a novel squared penalty, that, again, exploits
the goal of MAP decoding. If we wish to keep everything uniform, we can try to push all surprisals close to 0, but this time with a squared penalty:
\begin{equation}
\calR_{\mathrm{square}}(\yy) = \sum_{t=1}^{|\yy|} u_t(y_t)^2
\label{eq:square}
\end{equation}
Experimentally, we expect to see the following: If encouraging decoded text to exhibit UID is helpful---and our logic in constructing regularizers is sound---all the regularizers (\cref{eq:var,eq:lv,eq:max,eq:square}) should lead to roughly the same performance under exact decoding and beam search with large beam widths. Such results would not only validate the connection between UID and high-quality text; comparable performance of optimal beam search\footnote{By optimal beam search, we mean beam search using the beam width that empirically leads to the best results.} and exact search under our regularized objective would provide explicit evidence for our declarative explanation of the inductive bias in beam search.

\begin{figure*}
\centering
    \includegraphics[width=\textwidth]{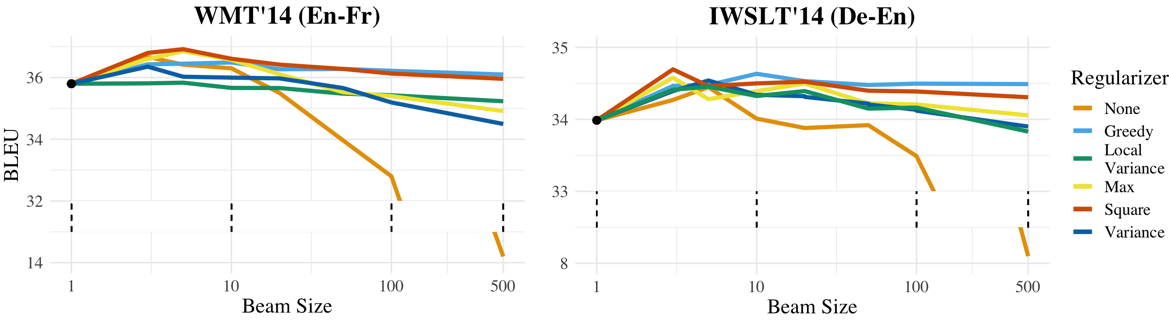}
  \caption{\bleu as a function of beam width for various regularizers. We choose $\lambda$ for each regularizer by best performance on validation sets (see \cref{sec:parameters}). $y$-scales are broken to show minimum \bleu values. $x$-axis is log-scaled.\looseness=-1 }
    \label{fig:beam_bleu}
\end{figure*}

\section{Experiments}
We explore how encouraging uniform information density in text generated by neural probabalistic text generators affects its downstream quality.  To this end, we decode NMT models using the regularized objective (\cref{eq:regularized-obj}) with our UID regularizers. We perform exact decoding for a range of $\lambda$ and observe how text quality (quantified by \bleu \cite{Papineni} using the SacreBLEU \cite{sacrebleu} system) and the distribution of surprisal changes. We additionally evaluate our regularizers under the beam search decoding strategy to see if penalizing violations of 
UID alleviates the text-quality degradation typically seen with increased beam widths.

Experiments are performed using models trained on the IWSLT'14 De-En \cite{IWSLTbib} and WMT'14 En-Fr \cite{WMT14} datasets. For reproducibility, we use the model provided by fairseq \cite{ott2019fairseq} for the WMT'14 task;\footnote{This model uses a transformer architecture \cite{vaswani_att} and was trained as in \citet{ott-etal-2018-scaling}.} we use the data pre-processing scripts and recommended hyperparameter settings provided by fairseq for training a model on the IWSLT'14 De-En dataset. We use the Newstest'14 dataset as the test set for the WMT'14 model. All model and data information can be found on the fairseq NMT repository.
\footnote{ \url{https://github.com/pytorch/fairseq/tree/master/examples/translation}}\looseness=-1

\subsection{Exact Decoding}
To perform exact decoding of neural probabilistic text generators, we build on the decoding framework of \citet{stahlberg-etal-2017-sgnmt}, albeit using Dijkstra's algorithm \cite{dijkstra1959note} instead of depth-first search as we find it decreases decoding time. Note that Dijkstra's algorithm is guaranteed to find the global optimum when path cost is monotonically increasing, which is the case for hypotheses under the scoring scheme used by neural probabilistic text generators (see \citet{meister+al.tacl20} for more detailed discussion). While the variance and local consistency regularizers \cref{eq:var,eq:lv} break this monotonicity property, we can still guarantee optimality by using a stopping criterion similar to the one proposed by \citet{yang-etal-2018-breaking}. Explicitly, we check if the top-scoring complete hypothesis has a greater score than the maximum possible score of any hypothesis in the queue. All scores are bounded due to the maximum-length criterion. Additionally, we lower-bound each search by the score of the empty string to decrease the memory footprint, i.e., we stop considering hypotheses whose scores (or maximum possible score in the case of \cref{eq:var,eq:lv}) drop below that of the empty string at any time step. 

\cref{fig:bleu_surprisal} demonstrates how the addition of the greedy UID regularizer (\cref{eq:greedy} ) to the regularized MAP objective (\cref{eq:regularized-obj}) affects characteristics of the global optimum under the model as we vary $\lambda$. Notably, increasing the strength of the regularizer appears to alleviate the text quality degradation seen with exact search, leading to results that approach the \bleu of those generated using optimal beam search. \cref{fig:bleu_surprisal} also shows a strong inverse relationship between \bleu and average standard deviation (per sentence) of surprisals. We take these observations as empirical validation of \cref{hyp:beam}. 

\subsection{Regularized Beam Search}
We next look at how the regularized decoding objective affects text generated using beam search. As previously noted, text quality generally degrades with increased beam size when using the standard MAP objective; this phenomenon is demonstrated in \cref{fig:beam_bleu}. UID regularization appears to alleviate this problem. Notably, the greedy and squared regularizer aid performance for larger beam sizes more so than other regularizers, for which we still see a slight drop in performance for larger beam sizes. This drop is negligible compared to the one observed for unregularized beam search---a drop which is also frequently observed for length-normalized decoding \cite{koehn-knowles-2017-six}. While intuitively, variance and local variance are the purest encodings of UID, they perform the poorest of the regularizers. Arguably, this may be due to the fact that they do not simultaneously (as the other regularizers do) penalize for high surprisal.

We additionally decode with a combination of the UID regularizers in tandem.
We collectively tune the $\lambda$ value for each of the regularizers on validation sets. We report performance in \cref{tab:results} and see that results outperform standard and length-normalized, i.e. score divided by sequence length, beam search with noticeable improvements for larger beams. Search details and parameter settings may be found in \cref{sec:parameters}. Notably, combining multiple UID regularizers does not lead to as great an increase in performance as one might expect, which hints that a single method for enforcing UID is sufficient for promoting quality in generated text.

\begin{table}
  \centering
  \adjustbox{width=\linewidth}{
  \begin{tabular}{@{}lllll@{}}
  \toprule
     & $k\!=\!5$ & $k\!=\!10$ & $k\!=\!100$ & $k\!=\!500$\,\,  \\
     
    \hline
     No Regularization &36.42 &36.30 & 32.83& 14.66 \rule{0pt}{3ex}\\
     Squared Regularizer & \bf 36.92 & 36.42& 36.13 & 35.96\\
     Greedy Regularizer & 36.45 & 36.49&  36.22 & 36.15\\
     Combined Regularizers & 36.69 & \bf 36.65 &\bf 36.48 & \bf 36.35 \\
     Length Normalization & 36.02 &35.94 & 35.80&  35.11 \\
    
    \bottomrule
  \end{tabular} }
  \caption{\bleu scores on first 1000 samples of Newstest2014 for predictions generated with various decoding strategies. Best scores per beam size are bolded.}
  \label{tab:results}
\end{table}

\section{Related Work}
Neural probabilistic text generators are far from perfect; prior work has shown that they often generate text that is generic \cite{neural_conv, li-etal-2016-diversity}, unnatural \cite{holtzman2019curious}, and sometimes even non-existent \cite{stahlberg_nmt_2019}.
In the context of the degenerate behavior
of these models, the beam search curse---a specific phenomenon where using a larger beam size leads to worse performance---has been analyzed by a number of authors \cite{koehn-knowles-2017-six, murray-chiang-2018-correcting, yang-etal-2018-breaking, stahlberg_nmt_2019, jean-etal-2015-montreal, tu-etal-2016-modeling, he-2016-length, pmlr-v97-cohen19a}.
Many of these authors attribute the performance drop (as search becomes better) to an inherent bias in neural sequence models to pefer shorter sentences.  Other authors have ascribed fault to the model architectures, or how they are trained  \cite{cho-etal-2014-properties,bengio2015scheduled, sountsov-sarawagi-2016-length, vinyals_2017, ott2018analyzing, kumar2019calibration}.
To remedy the problem, a large number of regularized decoding objectives and modified training techniques have been proposed. In contrast, this work analyzes the behavior of neural text generators from a different angle: We provide a plausible answer---inspired by psycholinguistic theory---as to \emph{why} beam search (with small beams) leads to high-quality text, rather than another explanation of why exact search performs so badly.

\section{Conclusion}
We analyze beam search as a decoding strategy for text generation models by framing it as the solution to an exact decoding problem. We hypothesize that beam search has an inductive bias which can be linked to the promotion of uniform information density (UID), a theory from cognitive science regarding even distribution of information in linguistic signals. We observe a strong relationship between variance of surprisals (an operationalization of UID) and \bleu in our experiments with NMT models. With the aim of further exploring
decoding strategies for neural text generators in the context of UID, we design a set of objectives to explicitly encourage uniform information density in text generated from neural probabalistic models and find that they alleviate the quality degradation typically seen with increased beam widths.

\section*{Acknowledgments}
We would like to thank Ari Holtzman and Jason Eisner for useful feedback and discussion that helped improve this work.

\bibliography{anthology,acl2020}
\bibliographystyle{acl_natbib}
\newpage
\clearpage
\appendix
\onecolumn

\section{Theory}\label{sec:theory}
\begin{proof}
We prove \cref{sec:beam-limit} by induction. We denote the $\argmax$ of $\log \ptheta(\yy \mid \xx) - \lambda \cdot \calR_{\mathrm{greedy}}(\yy)$ as $\yy^\calR$ and the solution found by greedy search as $\yy^\mathrm{greedy}$. We will show that $y^\mathrm{greedy}_t = y^\calR_t$ for all $0 \leq t \leq \max(|\yy^\calR|, |\yy^\mathrm{greedy}|)$. 
The theorem holds trivially for the base case of $t=0$ because $y_0$ must be \bos for any valid hypothesis by definition of the hypothesis space (\cref{eq:candidates}).
Now, by the inductive hypothesis, suppose $y^\mathrm{greedy}_i = y^\calR_i$ for all $i < t$. We will show that our regularized objective must choose the same word as greedy search
at time-step $t$.
In the limiting case of \cref{eq:greedy}, the following function reflects the penalty to the distribution over tokens at position $t$:
\begin{equation}
    \lim_{\lambda \rightarrow \infty} \Big[
    \lambda \cdot \Big(u_t(y_t) - \min_{y' \in \vocab} u_t(y') \Big)^2 \Big] 
= \begin{cases} 0 & \textbf{if } u_t(y_t) = \min_{y' \in \vocab} u_t(y') \\
\infty & \textbf{otherwise} \end{cases} \nonumber
\end{equation} 
Since minimum surprisal implies maximum log-probability, the above function clearly returns either $0$ or $\infty$ depending on whether the decoding choice at time-step $t$ is greedy. Therefore the only choice that would not send the hypothesis score to $-\infty$ is the greedy choice, which implies any feasible solution to our objective must have $ y^\calR_t = y^\mathrm{greedy}_t$. 
By the principle of induction, $y^\mathrm{greedy}_t = y^\calR_t$ for all $0 \leq t \leq |\yy^\calR| = |\yy^\mathrm{greedy}|$, which in turn implies $\yy^\mathrm{greedy} = \yy^\calR$.
\end{proof}

\section{Parameters}\label{sec:parameters}
For values in \cref{fig:beam_bleu}, we perform grid search over $\lambda \in [0.2,0.5,0.7,1,2,3,4,6,7,8,9,10]$ and choose the $\lambda$ with the best validation set performance.
For combined UID regularization, we perform hyperparameter search over the 5 strength parameters, each sampled uniformly from the following values: $[0, 0.2,0.5,0.7,1,2,3,4,6,7,8,9,10]$. We run 50 trials on the validation set; $\lambda=5$ and $\lambda=2$ yield the best performance for the greedy and squared regularizers, respectively with all others $\lambda$ set to 0.

\begin{table}[ht!]
  \centering
  \adjustbox{max width=\linewidth}{
  \begin{tabular}{@{}lll@{}}
  \toprule
     & IWSLT'14 & WMT'14 \\
     
    \hline
     Greedy & 10 & 5\\
     Local Consistency &4 &6 \\
     Max &5 &3 \\
     Squared & 3& 2 \\
     Variance &7 &3\\
    
    \bottomrule
  \end{tabular} }
  \caption{$\lambda$ settings used during decoding in \cref{fig:beam_bleu} and reported in table \cref{tab:results}.}
  \label{tab:params}
\end{table}

\section{Additional Plots}
\begin{figure}[ht]
\floatbox[{\capbeside\thisfloatsetup{capbesideposition={left,top},capbesidewidth=7.25cm}}]{figure}[\FBwidth]
{\caption{\bleu vs. std. deviation of surprisals for translations generated with beam search on test sets of IWSLT'14 and WMT'14. Size of point indicates beam width used (between 5 and 100). In contrast to the subgraph of \cref{fig:bleu_surprisal}, the $x$-axis is not log-scaled.}\label{fig:beam_surp}}
{\includegraphics[width=8cm]{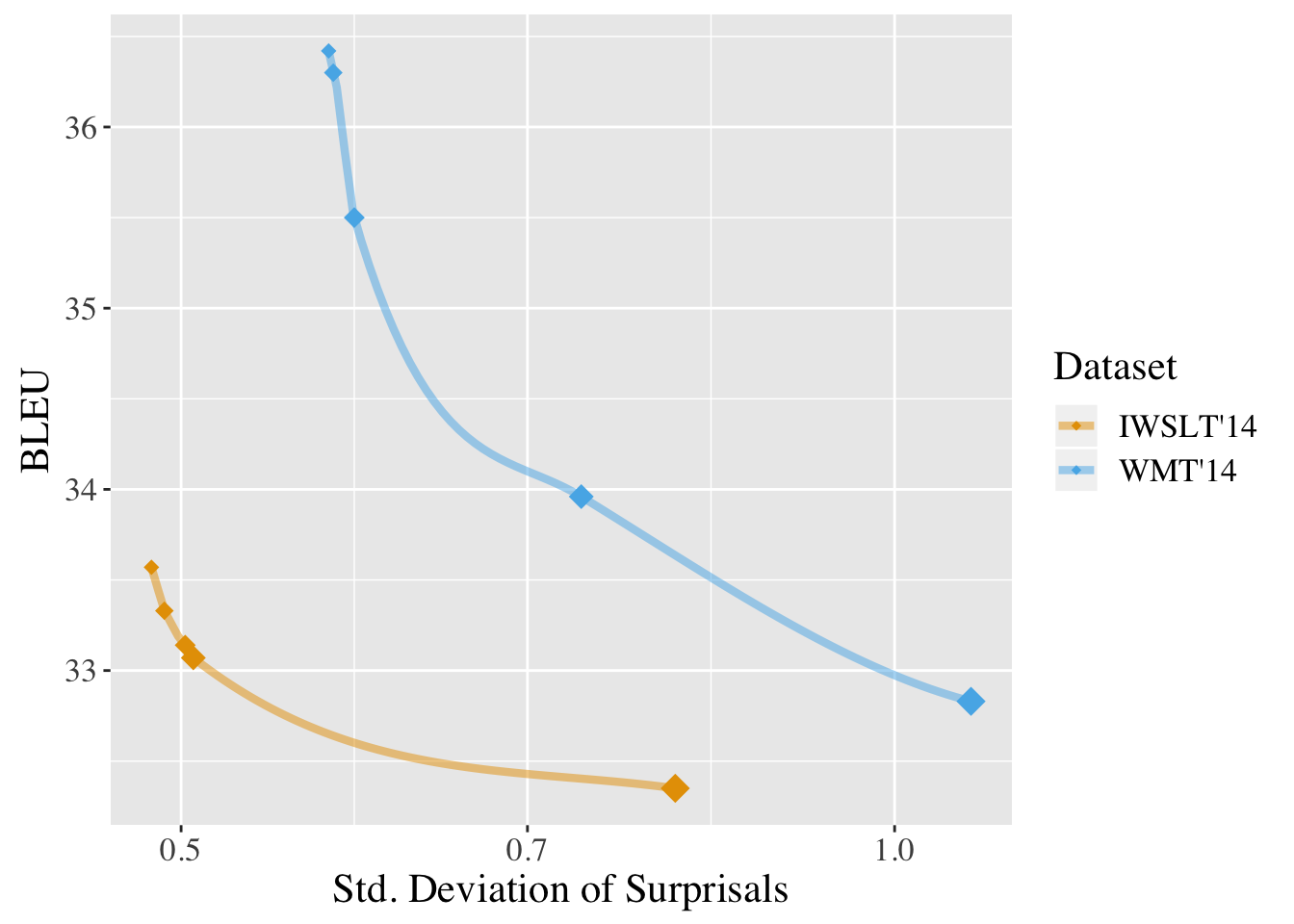}}
\end{figure}

\end{document}